\icmltitlerunning{Mixing Complexity and Neural Networks}
\newtheorem{lemma}{Lemma}
\newtheorem{theorem}[lemma]{Theorem}
\newtheorem{corollary}[lemma]{Corollary}
\newtheorem{definition}[lemma]{Definition}
\newtheorem{claim}[lemma]{Claim}
\newcommand{\E}{\mathbb{E}}
\newcommand{\cH}{\mathcal{H}}
\newcommand{\cD}{\mathcal{D}}
\newcommand{\cX}{\mathcal{X}}
\begin{document} 

\twocolumn[
\icmltitle{Mixing Complexity and its Applications to Neural Networks}




\begin{icmlauthorlist}
\icmlauthor{Michal Moshkovitz}{h}
\icmlauthor{Naftali Tishby}{h}
\end{icmlauthorlist}

\icmlaffiliation{h}{The Hebrew University of Jerusalem, Israel}

\icmlcorrespondingauthor{Michal Moshkovitz}{michal.moshkovitz@mail.huji.ac.il}
\icmlcorrespondingauthor{Naftali Tishby}{tishby@cs.huji.ac.il}


\vskip 0.3in
]



\printAffiliationsAndNotice{}  

\begin{abstract} 
We suggest analyzing neural networks through the prism of space constraints. 
We observe that most training algorithms applied in practice use bounded memory, which enables us to use 
a new notion introduced in the study of space-time tradeoffs that we call \emph{mixing complexity}.
This notion was devised in order to measure the (in)ability to learn using a bounded-memory algorithm.
In this paper we describe how we use mixing complexity to obtain new results on what can and cannot be learned using neural networks.
\end{abstract} 

\section{Introduction}
Understanding neural network learning is an active research area in machine learning and neuroscience \cite{shamir16,safran16,eldan16,daniely16,maithra16,arora14,livni14,tishby15,kadmon16}.
In this paper we view this problem through a different lens --- that of space constraints. 
We observe that learning with neural networks, either artificial or biological, is almost always done using a bounded-memory algorithm. 
In the setting of machine learning, artificial neural networks most often use the Stochastic Gradient Descent (SGD) algorithm, one example of a bounded-memory algorithm. 
In neuroscience, biological neural networks, i.e., the nervous system, inherently perform a bounded-memory computation under the accepted assumption that learning should be \emph{biologically plausible}.
This places the problem of learning with neural networks in the framework of bounded-memory learning. 

In recent years, several works have shown that under memory constraints numerous examples are needed in order to learn certain hypothesis classes \cite{shamir14,raz16,kol16,moshkovitz17,raz17}. 
Expanding on the work of \cite{moshkovitz17} we define a new complexity measure, \emph{mixing complexity}, in order to evaluate the difficulty of learning a class under memory constraints.
For a class $\cH$ we denote its mixing complexity by $MC(\cH)$; the larger it is, the more ``complex'' $\cH$ is. 
Roughly speaking, $MC(\cH)$ assesses the closeness of $\cH$ to a random class. 
It does so by viewing $\cH$ as a bipartite graph and comparing its edge distribution to that of a truly random graph.

In this paper we explore mixing complexity and its applications to neural networks.
Our first application shows that any class $\cH$ with mixing complexity $MC(\cH)=\Omega(\sqrt{|\cH|})$ cannot be learned by neural networks.
One implication is that most classes cannot be learned by neural networks.

The above might seem to contradict the fact that, empirically, neural networks do indeed learn \cite{krizhevsky12,lecun15}.
To bridge this gap, we suggest that ``natural'' hypothesis classes have certain ``symmetries''.
We formalize this notion and prove that such natural classes have high mixing complexity.
Thus, mixing complexity sheds a new light on our understanding of what can and cannot be learned using neural networks. 

In addition we discuss an application of mixing complexity as a response to a question raised by \cite{zhang17}.
They showed that classification of natural images by a specific neural network achieves a small generalization error.
However, for random labels the same neural network suffers from a large generalization error.
They showed that classical measures fail to explain these findings, and left as an open problem the task of finding a more suitable complexity measure. 
We show that mixing complexity does in fact distinguish between natural images and random classes. 

Given the usefulness of mixing complexity, as demonstrated by the above applications, we prove that it has several desirable properties. 
First, we prove that any class $\cH$ with mixing complexity $MC(\cH)=\Omega(\sqrt{|\cH|})$ has a VC-dimension $\Omega(\log|\cH|)$. 
Since the VC-dimension of every class $\cH$ is at most $\log_2|\cH|$, we get that such classes are the hardest to learn (up to a constant factor) without memory constraints. 
Furthermore, we prove that mixing complexity is robust under small perturbations.
That is, if a small number of labels is changed,  the mixing complexity is approximately unchanged.

\subsection{Paper overview}
In Section~\ref{sec:preliminaries} we briefly review the definitions of learning, the VC-dimension, and bounded-memory algorithms. 
In Section~\ref{sec:nn_bounded_memory} we show that artificial and biological neural networks each use a bounded-memory algorithm. 
In Section~\ref{sec:mixing_graphs} we define mixing complexity and exemplify it.
In Section~\ref{sec:unlearnability_of_mixing} we restate the main theorem proved in \cite{moshkovitz17}. It claims that bounded-memory algorithm cannot learn hypothesis classes that are ``mixing''.
In Section~\ref{sec:when_can_learn} we explore a set of natural classes and prove they are not mixing; i.e., a bounded-memory algorithm may be able to learn these classes.
In Section~\ref{sec:google} we describe the work done in  \cite{zhang17} and its connection to mixing complexity.
In Section~\ref{sec:perturbation_robustness} we prove some desirable properties of mixing complexity.
Section~\ref{sec:conclusions} summarizes the results and leaves some open problems  for future work. 

\section{Preliminaries}\label{sec:preliminaries}
\subsection{Learning}
Learning is the process of converting experience into expertise. 
A learner receives labeled examples $(x,b)\in\cX\times\{0,1\}$ one after another as experience and after enough examples the learner outputs a \emph{hypothesis} $h:\cX\rightarrow\{0,1\}$. 
The goal is to return $h$ that minimizes the \emph{test error}, which is defined as the probability to return a different answer than the true underlying hypothesis $f$ $$L_{(D,f)}(h)=\Pr_{x\sim \cD}[h(x)\neq f(x)]$$ 
The examples in the learning process are drawn independently from some unknown distribution $\cD$ and an underlying hypothesis $f$.

\begin{definition}[PAC learnable, \cite{valiant84}]
A hypothesis class $\cH$ is \emph{PAC learnable} if there exists a function $m_{\cH}:(0,1)^2\rightarrow\mathbb{N}$ and a learning algorithm $A$ with the following property: For every $\epsilon,\delta\in(0,1)$, for every distribution $\cD$ over $\cX$, and for every underlying hypothesis $f\colon\cX\to\{0,1\}$, when running the learning algorithm on $m\geq m_{\cH}(\epsilon,\delta)$ i.i.d examples generated by $\cD$ and labeled by $f$,  the algorithm returns a hypothesis $f$ such that, with a probability of at least $1-\delta$ (over the choice of the examples), $$L_{(D,f)}(h)\leq\epsilon.$$
\end{definition}
Given a series of examples $S=\{(x_i,y_i)\}$ we define the \emph{training error} as $$L_S(h)=\frac{1}{|S|}\sum_{i=1}^{|S|}I_{h(x_i)\neq y_i},$$ where $I_P$  returns $1$ if the predicate $P$ is true, else $0.$ 
The difference between the training error and the test error is called the \emph{generalization error}.  

\subsection{The VC-dimension Complexity}\label{sub_sec:preliminaries_VC}
The VC-dimension is a complexity measure for hypothesis classes, as proven in the \emph{Fundamental Theorem of Statistical Learning}.
\begin{definition}[restriction]
Let $\cH$ be a hypothesis class for binary classification over the domain $\cX$.
Let $S=\{x_1,\ldots,x_{n}\}$ be a set of examples. 
The \emph{restriction} of $\cH$ into $S$ is the set $$\cH_S=\{(h(x_1),\ldots,h(x_n)) :  \; h\in \cH \}.$$
\end{definition}

\begin{definition}[shattering]
A hypothesis class $\cH$ \emph{shatters} a set $S\subseteq\cX$ if $|\cH_S|=2^{|S|}.$
\end{definition}

\begin{definition}[VC-dimension]
The VC-dimension of a hypotheses class $\cH$ is $$VCdim(\cH)=\sup\{|S|  : \; \cH\text{ shatters }S\}$$
\end{definition}

\begin{theorem}[Fundamental Theorem of Statistical Learning]
Let $\cH$ be a hypothesis class over the domain $\cX$, and let $d=VCdim(\cH)$. 
Then, there are absolute constants $C_1, C_2$ such that $\cH$ is PAC learnable with sample complexity
$$C_1\frac{d+\log\frac1{\delta}}{\epsilon}\leq m_{\cH}(\epsilon,\delta) \leq C_2\frac{d\log\frac1\epsilon+\log\frac1\delta}{\epsilon}.$$
\end{theorem}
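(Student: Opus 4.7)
The plan is to prove the two bounds separately: the upper bound follows from analysing Empirical Risk Minimization (ERM) together with a uniform convergence argument, while the lower bound uses a probabilistic adversary supported on a shattered set.

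For the upper bound, I would first argue that in the realizable PAC setting an ERM algorithm suffices: given $m$ samples $S$ labeled by some $f\in\cH$, any $h\in\cH$ with $L_S(h)=0$ is a candidate. Hence it is enough to show that with probability at least $1-\delta$, every $h\in\cH$ with $L_{(\cD,f)}(h)>\epsilon$ also has $L_S(h)>0$. The key tool is bounding the \emph{growth function} $\Pi_{\cH}(m)=\max_{|T|=m}|\cH_T|$ via the Sauer--Shelah lemma, which gives $\Pi_{\cH}(m)\le (em/d)^d$. I would then apply the classical symmetrization (ghost sample) trick: the probability of a bad event with respect to $\cD$ is bounded, up to a factor of $2$, by a probability on a draw of $2m$ i.i.d.\ points, where the restriction of $\cH$ has size at most $\Pi_{\cH}(2m)$. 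A union bound over these labellings, combined with a Chernoff-type estimate of the form $2^{-\epsilon m/2}$ for each fixed pattern, yields a failure probability of at most $\Pi_{\cH}(2m)\cdot 2^{-\epsilon m/2}$. Setting this below $\delta$ and solving for $m$ gives $m=O((d\log(1/\epsilon)+\log(1/\delta))/\epsilon)$, matching the stated upper bound.

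For the lower bound, I would fix a shattered set $S=\{x_1,\dots,x_d\}$ and construct a distribution $\cD$ on $\cX$ that places probability $1-c\epsilon$ on $x_1$ and spreads the remaining mass uniformly on $x_2,\dots,x_d$, for a small absolute constant $c$. I would then invoke the standard no-free-lunch argument: choose the target $f$ uniformly at random among hypotheses agreeing with a fixed label on $x_1$ and arbitrary on the other shattered points. With $m=o(d/\epsilon)$ examples the learner sees, in expectation, a constant fraction of $\{x_2,\dots,x_d\}$ that is simply unobserved, and since $\cH$ shatters these points the labels are information-theoretically independent of the sample. This forces expected error at least $\Omega(\epsilon)$, and Markov's inequality converts this into the $\delta$-confidence bound $\Omega(d/\epsilon)$. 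The additive $\log(1/\delta)/\epsilon$ term is obtained by a separate, simpler two-point reduction to distinguishing two biased coins with confidence $1-\delta$; taking the maximum of the two lower bounds yields the claimed $\Omega((d+\log(1/\delta))/\epsilon)$.

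The main obstacle is the symmetrization step in the upper bound: the event ``some $h\in\cH$ has large test error but zero training error'' depends on all of $\cH$ simultaneously, so one cannot apply a single Chernoff bound to it directly; the ghost sample argument is what replaces the unknown distribution $\cD$ by a finite combinatorial object to which Sauer--Shelah applies. Once symmetrization is in place the rest is routine bookkeeping. On the lower-bound side, the delicate issue is combining the two independent sources of difficulty (the VC-dimension $d$ and the confidence parameter $\log(1/\delta)$) into a single bound, which I handle via two disjoint arguments whose maximum is taken.
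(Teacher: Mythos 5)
The paper does not prove this theorem; it is quoted in the preliminaries as a classical background fact (it is the standard VC theorem for the realizable setting, as in Blumer--Ehrenfeucht--Haussler--Warmuth and Ehrenfeucht--Haussler--Kearns--Valiant). Your sketch is exactly the canonical argument for it: ERM plus Sauer--Shelah plus the ghost-sample symmetrization for the upper bound, and the shattered-set adversary combined with a separate two-point (coin-distinguishing) argument for the $\log(1/\delta)/\epsilon$ term in the lower bound, with the two lower bounds merged by taking a maximum. The outline is correct and complete at the level of a proof plan. One small point of care: the step you call ``Markov's inequality'' in the lower bound is really the reverse direction --- you have a lower bound on the expected error of a bounded random variable and need a lower bound on the probability that it exceeds $\epsilon$, which follows because the error is at most the total mass $O(\epsilon)$ placed off the heavy point; plain Markov goes the wrong way. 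With that phrasing fixed, the proposal matches the standard proof.
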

For convenience, we henceforth set $\epsilon,\delta=1/4$. It is known that any class $\cH$ can be learned with $O(\log|\cH|)$ examples without memory constraints. 
If a bounded-memory learning algorithm must use at least $|\cH|^c$, for some constant $c>0$, we say that the class is \emph{unlearnable} with these memory constraints.

\subsection{Bounded-Memory Learning Algorithm}
One approach to designing a learning algorithm is to save all the examples received and return a hypothesis with a minimal training error.
Notice that this approach does not use a bounded memory. 
In this paper, however, we focus on bounded-memory algorithms.

A bounded-memory algorithm is a Turing machine with a bounded size tape $s$ with each cell in the tape being either $'0'$ or $'1'$.
It is useful to think of such an algorithm as a graph on $\Lambda=2^s$ vertices. 
Each vertex is one possible \emph{memory state}.
In each step, the algorithm is in one memory state.
When faced with a new example, the algorithm transition to another memory state. 
In the final step, the algorithm outputs a hypothesis that depends on the memory state where it ended up.

In this paper we investigate hypothesis classes that are unlearnable with a bounded-memory algorithm. 
We adhere to the \emph{realizability assumption}, which means that there is a hypothesis in the class with a test error equal to $0.$
Notice that using this assumption only strengthens the unlearnability result.  
Also note that this paper focuses on the statistical aspect (i.e., how many examples are needed to learn) and not the computational aspect (i.e., how much time is needed to learn). 
Notice that proving that bounded-memory algorithms must use many examples (the statistical aspect) immediately yields that the bounded-memory algorithms must run slowly (the computational aspect).

\section{Neural Networks and Bounded-Memory Algorithms}\label{sec:nn_bounded_memory}
In this section we discuss the connection between bounded-memory algorithms and neural networks in machine learning and neuroscience.
From the perspective of machine learning, neural networks define a hypothesis class. 
The algorithm that is almost always used to find a hypothesis from this class is a bounded-memory algorithm (i.e., the stochastic gradient descent algorithm), as explained below.  
From the perspective of neuroscience, we show that under the accepted assumptions, any computation made by the nervous system must be a bounded-memory algorithm. 

\subsection{Neural Networks and Machine Learning}\label{sub_sec:nn_and_ml}
Artificial neural networks have dramatically improved the state-of-the-art in many fields (see \cite{lecun15} and referenced therein).
In general, however, learning a neural network is NP-hard \cite{blum88}. 
This has led many researchers to attempt to understand the reasons for the astonishing success despite the proven hardness \cite{shamir16,safran16,eldan16,daniely16,maithra16,arora14,livni14}.

In this section we establish one property of the widely used algorithm for learning neural networks: it is a bounded-memory algorithm.
A feed-forward artificial neural network is composed of layers of neurons and directed edges between consecutive layers, this is known as the \emph{architecture} of the neural network. 
Each neuron computes the mapping $\sigma(w\cdot x + b),$ where $x$ is the input to the neuron, $w$ is a vector that represents the weight of each input, $b$ is a bias term, and $\sigma:\mathbb{R}\rightarrow\mathbb{R}$ is some activation function.

The stochastic gradient descent (SGD) method is a popular way to learn the weights of a neural network. 
When it gets a new example (or a small number of examples) it changes the current weights of the neural network, based on the appropriate gradient.
Thus it is a bounded-memory algorithm.

\subsection{Neural Networks and Neuroscience}\label{sub_sec:nn_and_neuroscience}
The nervous system is responsible for processing all the information an organism receives and acting accordingly.
It is composed of a large number of neurons that are connected to one another and together form a biological neural network.
Two neurons are connected through a synapse, a structure that permits the transmission  of an electrical or chemical signal from one neuron (called the pre-synaptic neuron) to another (called the post-synaptic neuron). 
Each pre-synaptic neuron can influence the post-synaptic neuron differently, depending on various biological parameters (e.g., the amount of neurotransmitter released into the synapse, the myelination).

In the standard model of a neural network, each synapse that connects a pre-synaptic neuron $i$ and a post-synaptic neuron $j$ is represented by some weight $w_{i,j}$ that scales the input from the pre-synaptic neuron activity $x_i$. 
The post-synaptic neuron computes the inner product of the multiple pre-synaptic neurons' activities and the synaptic weights $\langle w_i, x\rangle = \sum_i w_{i,j}x_i$. 
This inner product passes through a nonlinearity $\sigma$ called the activation function and the result $\sigma(\langle w_i, x\rangle)$ is transmitted to the post-synaptic neuron.

In the standard model, learning in the nervous system manifests itself by a change in the synaptic weight. 
Neuroscientists focus solely on changes that are \emph{biologically plausible}; i.e., those that fulfill some biological constraints. 
The most important constraint, for our purposes, is that the changes in weights are only a function of the current sensory input. 
This means that the brain computes a bounded-memory algorithm.

\section{Mixing Complexity}\label{sec:mixing_graphs}
\subsection{Hypotheses Graphs}
A hypothesis class can be viewed as a bipartite graph in which on one side there is a vertex for each hypothesis $h$ and on the other side there is a vertex for each example $x$, and there is an edge $(h,x)$ if and only if $h(x)=1$. 
To illustrate it, focus on the class $\cH_{th}$ of discrete threshold functions in $[0,1]$: 
\begin{itemize}
\item the examples are the numbers $$\cX=\left\{0=\frac{0}{|\cX|-1},\frac{1}{|\cX|-1},\ldots,\frac{|\cX|-1}{|\cX|-1}=1\right\}$$ 
\item the hypotheses correspond to the $|\cX|+1$ thresholds $b\in \left\{-1, \frac{1}{2(|\cX|-1)},\frac{3}{2(|\cX|-1)},\ldots,\frac{2|\cX|}{2(|\cX|-1)}\right\}$ and $h_b(x)=1$ if $x\leq b$ and $0$ otherwise. The number of hypotheses is equal to $|\cH_{th}|=|\cX|+1$.
\end{itemize}
The graph that corresponds to $\cH_{th}$ is a bipartite graph of size $(|\cX|+1)\times |\cX|$ and the edge  $(h_b,x)$ exists in the graph if $h_b(x)=1$, see Figure~\ref{fig:half_graph}.
Viewing the class as a graph enables us to examine some properties of the graph; e.g., the number of edges in this graph is $\frac{|\cX|(|\cX|+1)}{2}=\frac{|\cH_{th}||\cX|}{2}$ which is exactly half of the maximal number of edges $|\cH_{th}||\cX|$.

Another hypothesis class we consider is parities $\cH_{parity}:$
\begin{itemize}
\item the examples are all the binary points in $\{0,1\}^{\log_2|\cX|}$, where $|\cX|$ is a power of $2.$
\item the hypotheses correspond to all subsets $C\subseteq\{0,1\}^n$, except the empty one and $h_C(x)=\sum_{i\in C}x_i,$ where $x_i$ is the $i$ coordinate of $x$. The size of the hypothesis class is $|\cH_{parity}|=|\cX|-1$
\end{itemize}
The number of edges in the graph is equal to $|\cH_{parity}|\frac{|\cX|}{2}$ which is half of all the possible edges $|\cH_{parity}| {|\cX|}$. 

\subsection{Mixing Graphs}
In Section~\ref{sec:unlearnability_of_mixing} we will utilize this new view of hypothesis classes as graphs to deduce the unlearnability of some classes when the learning algorithm has bounded memory. 
This result will hold for classes with hypotheses graphs that are ``close'' to random. 
There are many ways to be close to random. One natural way is by using the edge-distribution, as discussed below.

For convenience we only consider hypothesis classes where on average over the hypotheses $h$, the number of examples $x$ with $h(x)=1$ is roughly the same as the number of examples with $h(x)=0$; i.e.,  $\E_{x,h}[h(x)=1]$ is close to $1/2$. 
Equivalently, out of all the possible numbers of edges in the graph, $|\cH|\cdot|\cX|$, there will be very close to $\frac{|\cH|\cdot|\cX|}{2}$ edges in the graph.
Stated differently, the expected number of edges between random subsets of vertices $S,T$ with $|S|=s, |T|=t$ is approximately $e(S,T)\approx \frac{st}{2}.$ 
In a random graph we expect that the number of edges between \emph{any} two subsets will be close to their average, up to the standard deviation $\sqrt{st}.$ More formally, 
\begin{definition}[$\mathrm{d}$-mixing]
A bipartite graph $G=(A,B,E)$ is $\mathrm{d}$-mixing if for any $T\subseteq A, S\subseteq B$ with $|S|=s, |T|=t$ it holds that $$\left|e(S,T) - \frac{st}{2}\right|\leq \mathrm{d}\sqrt{st}.$$
\end{definition}

We remark that the latter definition can be stated for graphs whose density, $\frac{e(A,B)}{|A||B|}$, is differs from half \cite{krivelevich06}\footnote{A more general definition of $\mathrm{d}$-mixing is: for any $T\subseteq A, S\subseteq B$ with $|S|=s, |T|=t$ it holds that $$\left|e(S,T) - \frac{st}{\frac{e(A,B)}{|A||B|}}\right|\leq \mathrm{d}\sqrt{st}.$$}.  

Note that the graph is closer to random as $\mathrm{d}$ gets smaller. 
We can find an immediate upper bound for $\mathrm{d}$ since we can bound the number of edges between any sets $S$ and $T$ by $0 \leq e(S,T)\leq st$, thus $$|e(S,T)-st/2|\leq st/2 = \frac{\sqrt{st}}{2}\sqrt{st}\leq\frac{\sqrt{|A||B|}}2\sqrt{st}.$$
Thus we get that $0\leq\mathrm{d}\leq\sqrt{|A||B|}$.  
Now we are ready to formally define \emph{mixing complexity}.
\begin{definition}[mixing complexity]
For hypothesis class $\cH$ over $\cX$ denote by $\mathrm{d}_{min}(\cH)$ the minimal value such that $\cH$ is $\mathrm{d}_{min}$-mixing. The mixing complexity of $\cH$ is $$MC(\cH)=\frac{\sqrt{|\cH||\cX|}}{\mathrm{d}_{min}(\cH)}.$$ 
\end{definition}
 We say that a class is \emph{mixing} if it is $\mathrm{d}$-mixing with $\mathrm{d}=O(\sqrt{\cX}),$ or equivalently if $MC(\cH)=\Omega(\sqrt{\cH}).$

Let us explore the mixing complexity of the hypothesis classes that we considered earlier.
We will prove that the class $\cH_{th}$ is $\Omega(\sqrt{|\cH_{th}||\cX|})$-mixing; i.e., $MC(\cH_{th})=O(1),$ which means that this class is mixing.
To show this take the first half of the hypothesis $T_0=\{h_b :\; 0\leq b<1/2\}$ with $|T_0|=t$ and the last half of the examples $S_0=\{x\; | 1/2<x\leq 1\}$ with $|S_0|=s$. 
By the definition of $\cH_{th}$ there are no edges between these two sets, i.e., $e(S_0,T_0)=0$, see Figure~\ref{fig:half_graph_mixing}.
However, we expect a great deal of edges $st/2=\Omega(|\cH_{th}||\cX|)$ between these two large sets. 
Hence, if $\cH_{th}$ is $\mathrm{d}$-mixing, then $$\mathrm{d}\geq\frac{\left|e(S_0,T_0)-\frac{st}2\right|}{\sqrt{st}}=\frac{\sqrt{st}}2=\Omega(\sqrt{|\cH_{th}||\cX|}).$$

On the other hand, the class $\cH_{parity}$ is very close to random: Lindsey's Lemma states that $MC(\cH_{parity})=\Omega(\sqrt{|\cH_{parity}|}).$ 

One might wonder what kind of classes are more abundant, the $\mathrm{d}$-mixing with small or large $\mathrm{d}$. 
Apparently almost all classes $\cH$ have small $\mathrm{d}=O(\sqrt{|\cX|})$, i.e., $MC(\cH)=\Omega(\sqrt{|\cH|})$ \cite{krivelevich06}. 
This fact can be proved using the Chernoff bound and the union bound, similar to the way one can prove that $O(\log|\cH|)$ examples are enough to learn any class $\cH$ without memory constraints. 

\begin{figure}[ht]
\vskip 0.2in
\begin{center}
\centerline{\includegraphics[ scale=0.7]{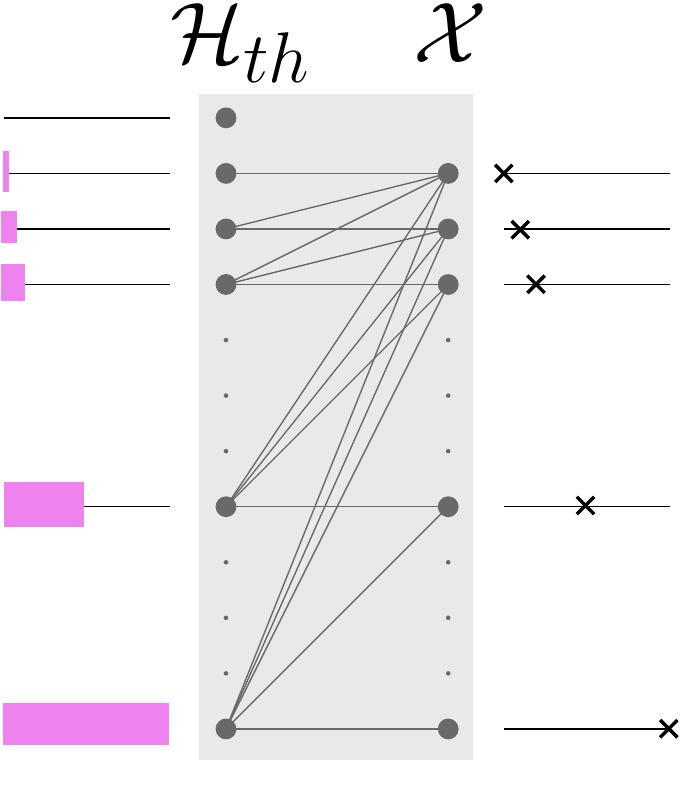}} 
\caption{The hypotheses graph of $\cH_{th}$ (middle). The right side consists of all the examples, each represented by a cross on the segment $[0,1].$ The left side consists of all the hypotheses, each returns $1$ on all inputs in the violet rectangle.}
\label{fig:half_graph}
\end{center}
\vskip -0.2in
\end{figure} 

\begin{figure}[ht]
\vskip 0.2in
\begin{center}
\centerline{\includegraphics[scale=0.7]{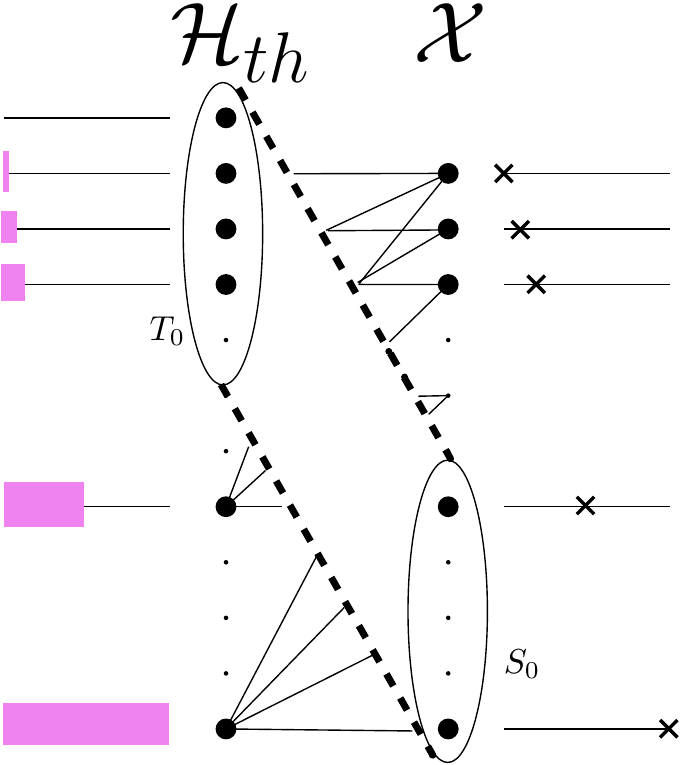}}
\caption{Schematic drawing of the proof that $\cH_{th}$ is not mixing. Focusing on the the sets $S_0,T_0$, we see that $e(S_0,T_0)=0.$}
\label{fig:half_graph_mixing}
\end{center}
\vskip -0.2in
\end{figure}

\section{Unlearnability of Mixing Classes with Bounded-Memory Algorithm}\label{sec:unlearnability_of_mixing}
In this section we restate the main theorem proved in \cite{moshkovitz17}.
This theorem shows that for any hypothesis class $\cH$ that is mixing, any learning algorithm with a bounded memory cannot learn $\cH.$
By \emph{cannot learn} we mean that the number of examples needed to learn the class is at least $|\cH|^c$, for some small constant $c>0.$
Recall that if the memory is not bounded, number of examples needed to learn $\cH$ is $O(\log|\cH|).$
The intuition behind this result is that if the memory is bounded, the learning algorithm must cope with the situation that many labeled examples $S$ will lead to the same memory state $m$.
For graphs that are mixing, most hypotheses $h$ are almost equally alike from the point of view of the memory $m.$  

We are interested in classes that are close enough to random such that $\mathrm{d}^2=O(|\cX|| \cH|^a),$ for some small enough constant $a\geq 0.$ 
For the parity class we described in the last section this holds since $\mathrm{d}^2=O(|\cX|).$
For such classes \cite{moshkovitz17} proved that a bounded-memory algorithm cannot learn this class. Specifically, 

\begin{theorem}\label{thm-maim}
Suppose that hypothesis class $\cH$ over domain $\cX$ is $\mathrm{d}$-mixing with $\mathrm{d}^2=|\cX||\cH|^{a}$ for some constant $a\in[0,1]$, $|\cH|$ is at least some constant, and 
$\left|\frac{e(\cH,\cX)}{|\cH|}-\frac{|\cX|}2\right|\leq\mathrm{d}\sqrt{\frac{|\cX|}{|\cH|}}$, then for any constant $s\in(0,1)$ there is a constant $s'>0$ such that any learning algorithm for $\cH$ that has at most
 $$|\cH|^{1.25-s-3a}$$ memory states and returns the underlying hypothesis (or an approximation of it) with a probability of at least $1/3$ must observe at least $|\cH|^{s'}$ labeled examples.
\end{theorem}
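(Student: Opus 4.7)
The plan is to follow the branching-program lower-bound paradigm of Raz, adapted to exploit the combinatorial $\mathrm{d}$-mixing property of $\cH$ rather than algebraic structure. I would model any learning algorithm with $\Lambda=|\cH|^{1.25-s-3a}$ memory states as a layered DAG: layer $t$ contains at most $\Lambda$ nodes (memory states), each with outgoing edges labeled by an example-label pair $(x,b)\in\cX\times\{0,1\}$ leading to layer $t+1$. I place a uniform prior on the true hypothesis $f$ and, for each node $v$ at time $t$, let $P_{v,t}(h)=\Pr[f=h\mid \text{reach }v]$. A successful algorithm must, with probability $\Omega(1)$, reach a node whose posterior is concentrated on a single hypothesis; I call such nodes \emph{significant}.

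Next I would track a progress measure, the $L_2$-energy $\Phi(v,t)=\sum_h P_{v,t}(h)^2$. Initially $\Phi=1/|\cH|$, while significant nodes have $\Phi$ that is polynomially larger. The key technical step is a one-step energy-increment lemma driven by the mixing condition: if the truncated support of the current posterior is a set $T\subseteq\cH$ of size $t$, and $S\subseteq\cX$ is the set of examples whose label substantially boosts $\Phi$, then the mixing inequality $|e(S,T)-st/2|\le\mathrm{d}\sqrt{st}$ forces $|S|$ to be small whenever $t$ is not too small. I would split outgoing edges of each node into a \emph{heavy} class (giving multiplicative energy gain more than $1+\gamma$) and a \emph{light} class; mixing bounds the count of heavy edges, while light edges give only a $1+O(\mathrm{d}^2/(|\cH||\cX|))$ multiplicative blow-up on $\Phi$.

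The final step is a union bound over layers. Call a path through the DAG \emph{bad} if it traverses a heavy edge or reaches a node whose truncated posterior deviates too much from uniform; the density condition $\left|\tfrac{e(\cH,\cX)}{|\cH|}-\tfrac{|\cX|}{2}\right|\le\mathrm{d}\sqrt{|\cX|/|\cH|}$ ensures this is initially satisfied. Iterating the one-step lemma over $T=|\cH|^{s'}$ steps yields $\Phi\le|\cH|^{-1}(1+|\cH|^{-1+a})^T$, which stays well below the significance threshold for small enough $s'$. Union-bounding the probability of a bad transition over all $T\cdot\Lambda$ nodes consumes exactly the budget $\Lambda\le|\cH|^{1.25-s-3a}$: the heavy-edge count scales with $\mathrm{d}^2=|\cX||\cH|^a$, and the accounting must absorb this cost in three separate places (two for bounding heavy edges into and out of each layer, one for the initial density-normalization error), which explains the $3a$ in the exponent.

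The main obstacle is calibrating the heavy/light threshold $\gamma$ and the posterior truncation jointly so that the heavy-edge count and the light-edge drift simultaneously fit within the $|\cH|^{1.25-s-3a}$ state budget and the $|\cH|^{s'}$ time budget. This is a delicate three-parameter optimization, and obtaining the claimed exponents requires tracking how each application of the mixing inequality trades a factor of $\mathrm{d}$ for a factor of $\sqrt{|T|}$; the density hypothesis on $\cH$ is precisely what rules out a trivial attack in which an unbalanced marginal leaks information about $f$ without any use of the learning algorithm at all.
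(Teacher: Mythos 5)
First, a point of reference: the paper does not actually prove Theorem~\ref{thm-maim}. It is restated from \cite{moshkovitz17}, and the only in-paper content is a two-sentence intuition, so there is no internal proof to compare your plan against. What can be compared is your route versus the cited argument, whose skeleton is visible in the paper's unused preamble macros (sets of memory states with ``heavy-source'' or ``many-source'' in-edges, bad events $Bad_{H_t}$ and $Bad_{M_t}$, expander-type counting). That argument is a purely combinatorial union bound over the branching program: memory states at each layer are classified by whether some in-edge carries too much probability mass or too many sources, and the mixing property is used to count such states directly. You instead propose the Raz-style $\ell_2$ progress measure $\Phi(v,t)=\sum_h P_{v,t}(h)^2$. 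That is a legitimate and genuinely different route --- it is the one behind the later, quantitatively stronger results --- but as written it does not establish this statement.

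Two gaps are substantive. First, the one-step energy-increment lemma is asserted rather than proved, and the translation from the hypothesis you actually have (the set-discrepancy bound $\left|e(S,T)-\frac{st}{2}\right|\leq\mathrm{d}\sqrt{st}$, which controls \emph{unweighted} edge counts between subsets) to the bound you need (control of the $\ell_2$ growth of an arbitrary \emph{weighted} posterior under conditioning on a single labeled example) is exactly where the work lies; it requires bucketing the posterior into level sets, applying mixing to each level set, and tracking the losses, none of which appears in the sketch. Second, the exponents are not derived. Your account of the $3a$ (``two for heavy edges into and out of each layer, one for the density normalization'') and of the $1.25$ is reverse-engineered from the statement rather than computed; in the $\ell_2$ framework one would, for $a=0$, expect a memory bound far exceeding $|\cH|^{1.25}$ states, so landing on precisely $|\cH|^{1.25-s-3a}$ would require an accounting different from the one you describe. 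You also leave unproved the claim that a successful learner must reach a ``significant'' node: showing that a leaf with a flat truncated posterior cannot output a correct (or approximately correct) hypothesis with probability $1/3$ itself relies on mixing, via the fact that distinct hypotheses disagree on many examples, and is not free. The plan is a sound research program, but the theorem is not established by it.
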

Note that the statement is trivially true if the number of memory states is smaller than $|\cH|$ (since $|\cH|$ memory states are needed to exactly distinguish $|\cH|$ possible hypotheses). 
Thus, the theorem is interesting solely for $a<1/12$.
Recall that most problems with $|\cH|\approx|\cX|$ (and specifically parity) have $\mathrm{d}^2=O(|\cX|)$.
Thus, from the previous theorem we have that for a learning algorithm with (roughly) at most $|\cH|^{1.25}$ memory states, the number of examples needed is exponentially larger than in the case of unbounded memory.

Let us revisit the classes we introduced in the previous section and explore their unlearnability with a bounded-memory learning algorithm. 
We proved that the class $\cH_{th}$ is $\Omega(\sqrt{|\cH_{th}||\cX|})$-mixing and indeed this class can be learned with a bounded-memory algorithm:
on each step it saves a lower and an upper threshold $b_{lower}\leq b_{upper}$ that represent a regime where the correct hypothesis must be located. 
If the algorithm received a labeled example $(x,0)$ with $b_{lower}\leq x\leq b_{upper}$ then $b_{upper}:=x$ else $b_{lower}:=x.$
This algorithm does not use many examples to find an approximation of the underlying hypothesis (the examples are expected to decrease the regime $b_{upper}-b_{lower}$ by some constant factor).
Note that even a generalization of $\cH_{th}$, the class of classification using halfspaces, can be learned with a bounded-memory algorithm, called the Perceptron algorithm \cite{rosenblatt58,shwartz14}.
 
We mentioned that the parity class $\cH_{parity}$ is $O(\sqrt{|\cX|})$-mixing and indeed it was recently discovered that it cannot be learned with even slightly bounded memory \cite{raz16}.
In the last section we proved that most problems are close to random; thus, we can deduce that most problems cannot be learned with memory that is bounded by nearly $|\cH|^{1.25}.$

\subsection{The Inability of a Neural Network to Learn Most Classes}
In Section~\ref{sub_sec:nn_and_ml} we showed that the most frequently used training algorithm implemented for learning a neural network is the SGD which is a bounded-memory algorithm. 
In Section~\ref{sub_sec:nn_and_neuroscience} we explained why any computation made by the brain (i.e., a biologically plausible computation) must be a bounded-memory algorithm. 
From Theorem~\ref{thm-maim} presented in this section we get as a corollary that classes that are $O(\sqrt{|\cX|})$-mixing cannot be learned by neural networks.
From Section~\ref{sec:mixing_graphs}, we also get that most hypothesis classes cannot be learned  by neural networks.

\section{What Can be Learned}\label{sec:when_can_learn}
 In the previous section we explained why most hypothesis classes cannot be learned with a bounded-memory algorithm and specifically by a neural network. 
One might wonder which (and how) classes can be learned with a bounded-memory algorithm. 
Can all the classes that are $\mathrm{d}$-mixing with large $\mathrm{d}$ be learned with a bounded-memory algorithm? 
In Section~\ref{sec:unlearnability_of_mixing} we gave as an example the class $\cH_{th}$ of discrete threshold functions in $[0,1]$ that is $\Omega(\sqrt{|\cH_{th}||\cX|})$-mixing and can easily be learned with a bounded-memory algorithm. 
In this section we consider other natural classes that have \emph{sufficient partitions}  (will be defined formally later, but for now think of them as ``symmetries'' that exist in the class). 
We prove that such classes are $\mathrm{d}$-mixing with large $\mathrm{d}$. 
We then cite evidence (empirically and theoretically) that these problems can be learned with a bounded-memory algorithm.

A convolutional neural network (CNN) is a type of artificial neural network inspired from the animal visual cortex. 
It is a powerful model to solve problems in machine learning and computer vision \cite{krizhevsky12,simonyan14,szegedy15}.
The core idea is to utilize the translation symmetry that exist in images. 
Recently this idea has been generalized to other symmetries \cite{gens14,dieleman16,cohen16}. 

Another form of symmetry was presented in \cite{kadmon16}. 
They considered classes where a small change in the input is considered as noise and thus should be labeled similarly.
They presented a bounded memory algorithm in the form of a small neural network that is able to learn this class. 

More generally we say that a class $\cH$ has an \emph{$r$-sufficient partition} if there is a partition $\cX=\dot\bigcup X_i$ of all the examples into $r$ parts, such that all hypotheses $h\in\cH$ assign the same value to each part (i.e.,  for each $i$ it holds that for all $x\in X_i$, $h(x)$ is equal to each other).
In the next claim we prove that if a class has an $r$-sufficient partition with small $r$, its mixing complexity is small, and thus it is possible that $\cH$ be learned by a bounded-memory algorithm.
\begin{claim}
For any class $\cH$ that has an $r$-sufficient partition, its mixing complexity is bounded by $MC(\cH)=O(\sqrt{r}).$
\end{claim}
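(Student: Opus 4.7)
The plan is to bound $\mathrm{d}_{min}(\cH)$ from below by exhibiting a single pair of subsets $(S,T)$ on which the edge count deviates massively from the expected $|S||T|/2$. Once we show $\mathrm{d}_{min}(\cH) = \Omega(\sqrt{|\cH||\cX|/r})$, the definition $MC(\cH)=\sqrt{|\cH||\cX|}/\mathrm{d}_{min}(\cH)$ immediately gives $MC(\cH)=O(\sqrt{r})$.

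First I would pick the largest part of the partition: since $\cX=\dot\bigcup_{i=1}^r X_i$, pigeonhole gives some part $X^*$ with $|X^*|\geq |\cX|/r$. Because this partition is sufficient, every hypothesis $h\in\cH$ is constant on $X^*$, so $\cH$ splits cleanly into $\cH_0=\{h:h|_{X^*}\equiv 0\}$ and $\cH_1=\{h:h|_{X^*}\equiv 1\}$. One of these two sets has size at least $|\cH|/2$; call it $T$ and set $S:=X^*$.

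Next I would compute $e(S,T)$ directly. By construction, either every vertex of $T$ is joined to every vertex of $S$ (if we picked $\cH_1$) or no vertex of $T$ is joined to any vertex of $S$ (if we picked $\cH_0$). In either case $|e(S,T)-|S||T|/2|=|S||T|/2$, so
\begin{equation*}
\frac{|e(S,T)-|S||T|/2|}{\sqrt{|S||T|}}=\frac{\sqrt{|S||T|}}{2}\geq\frac{1}{2}\sqrt{\frac{|\cX|}{r}\cdot\frac{|\cH|}{2}}=\frac{1}{2\sqrt{2}}\sqrt{\frac{|\cH||\cX|}{r}}.
\end{equation*}
By the definition of $\mathrm{d}_{min}(\cH)$, this forces $\mathrm{d}_{min}(\cH)\geq \frac{1}{2\sqrt{2}}\sqrt{|\cH||\cX|/r}$, and plugging into the definition of mixing complexity yields $MC(\cH)\leq 2\sqrt{2r}=O(\sqrt{r})$, as claimed.

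There is no real obstacle here; the argument is essentially the same ``large constant block'' obstruction used earlier in the paper to show that $\cH_{th}$ is not mixing (Figure~\ref{fig:half_graph_mixing}), only now the role of the ``empty corner'' is played by the largest partition cell $X^*$ together with the majority side of $\{\cH_0,\cH_1\}$. The only minor care needed is to justify that the convention of measuring discrepancy against $|S||T|/2$ (rather than against the actual density $e(\cH,\cX)/(|\cH||\cX|)$ times $|S||T|$) is consistent with how $MC$ is defined throughout the paper; if one prefers the more general definition in the footnote, the same pair $(S,T)$ still gives the bound, because the extremal edge-counts $0$ and $|S||T|$ are as far as possible from \emph{any} intermediate expected value.
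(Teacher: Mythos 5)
Your proof is correct and is essentially identical to the paper's: both take the largest part $X^*$ (of size at least $|\cX|/r$ by pigeonhole) together with the majority side of the split of $\cH$ by its constant value on $X^*$, observe that the edge count between them is extremal, and thereby derive $\mathrm{d}_{min}(\cH)=\Omega\bigl(\sqrt{|\cH||\cX|/r}\bigr)$, i.e.\ $MC(\cH)=O(\sqrt{r})$. Your closing remark about the density convention is a reasonable extra observation, but the core argument matches the paper's proof step for step.
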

\begin{proof}
Since there are $r$ parts in the partition there is at least one part $X$ with size at least $\frac{|\cX|}{r}.$
At least half of the hypotheses $H\subseteq\cH$ (i.e., $|H|\geq|\cH|/2$) either all agree or all disagree with the examples in $X$.
Thus,$$\left|e(H,X)-\frac{|H||X|}{2}\right|\geq \frac{|H||X|}2.$$  
Hence, if the class $\cH$ is $\mathrm{d}$-mixing, then $$\mathrm{2d}\geq\frac{|H||X|}{\sqrt{|H||X|}}=\sqrt{|H||X|}\geq\sqrt{\frac{|\cH|}{2}\frac{|\cX|}{r}}.$$
Thus $\cH$ is $\Omega\left(\sqrt{\frac{|\cH||\cX|}{r}}\right)$-mixing.
\end{proof}

\section{Rethinking Generalization to Understand Deep Learning}\label{sec:google}
In this section we suggest an answer to the open problem presented in \cite{zhang17}: 
\begin{center}
\textit{What distinguishes neural networks that generalize well from those that don't?}
\end{center}

Elegantly \cite{zhang17} put their finger on a tremendous gap in current research on deep learning: the inability to know when the test and training errors are close.
In other words, it is desired to distinguish the classes that have an inherent large generalization error, regardless of the specific learning algorithm used. 
This contrasts with to uniform stability \cite{kearns99,bousquet02} that consider whether a specific learning algorithm does not have a large generalization error.
In other words, uniform stability is a property of an algorithm and not of the hypothesis class. 

These authors \cite{zhang17} illustrated this gap nicely in the following set of experiments. 
They used two known natural image classification datasets (CIFAR10 dataset, \cite{krizhevsky09}, and the ImageNet \cite{russakovsky15} dataset).
They trained a few known neural network architectures using SGD and got small training and test errors; i.e., a small generalization error.  
Then they conducted several experiments, each involving changes in the datasets in some random way. 
In one experiment they changed the labels to be completely random.
In another experiment they used varying levels of label corruptions.
They also experimented with randomly changing the pixels of the image. 
In all of these experiments the test error naturally increased. 
Perhaps surprisingly these state-of-the-art convolutional networks for image classification trained with stochastic gradient methods had a very small training error; i.e., we view a large generalization error. 
They also tried to use different kinds of regularizations (e.g., dropout and weight decay) but concluded that this is unlikely that it is  the fundamental reason for generalization.

They also explained why known complexity measures used in machine learning cannot account for their results.
For example, they pointed out that in the regime they considered the VC-dimension leads to trivial bounds. 
Specifically, they considered neural networks with a number of parameters that was large compared to the sample size. 
Since the VC-dimension of a class with more than $n$ parameters is at least $n$, it does not help to answer their open problem. 
The concept of uniform stability does not help either because it is a property of an algorithm and not of the hypothesis class.
Thus, the conventional wisdom that that tries to answer their open problem by using either properties of the hypothesis class (e.g., VC-dimension), or regularization techniques used during training is flawed. 

Unlike other measures mentioned in \cite{zhang17}, the mixing complexity is able to distinguish between a random hypothesis class and class with natural images. 
As we proved in Section~\ref{sec:mixing_graphs} a random class is $\mathrm{d}$-mixing with small $\mathrm{d}$. 
On the other hand, natural images have some symmetries, and thus not mixing (see Section~\ref{sec:when_can_learn}), and there has been an extensive use of the fact that natural images contain a great deal of structure (e.g., \cite{krizhevsky12,gens14,dieleman16,cohen16})

Note that we cannot use \cite{raz17,moshkovitz17} to justify the use of the mixing complexity in this context. 
In these papers the assumption is that the examples are randomly chosen from $\cX$ on each step.
However, in \cite{zhang17}, there was excessive use of the training data; i.e., similar examples were used multiple times.  
This begs the question as to  whether the proof of \cite{raz17,moshkovitz17} can be generalized to the setting in \cite{zhang17} as well.

\section{Hardness and Robustness of Mixing Classes}\label{sec:perturbation_robustness}
In this section we prove that mixing complexity has several desirable properties. 
First, we prove that any class $\cH$ that is mixing has a VC-dimension $\Omega(\log|\cH|)$. 
Since the VC-dimension of every class $\cH$ is at most $\log_2|\cH|$, we get that such classes are the hardest to learn (up to a constant factor) without memory constraints. 
Furthermore, we prove that mixing complexity is robust under small perturbations.
That is, if a small number of labels is changed, the mixing complexity is approximately unchanged. 

 \subsection{Mixing Hypothesis Classes and VC-dimension}
In this section we focus on hypothesis classes that are mixing and we try to understand the hardness of learning these classes without memory constraints.
As discussed in Section~\ref{sub_sec:preliminaries_VC}, the VC-dimension is used to measure the complexity of learning a class (without memory constraints).
We prove that mixing classes $\cH$ have $VCdim(\cH)=\Omega(\log\cH)$. 
To show this, we will find a set of $k=\Omega(\log\cH)$ examples such that restriction of $\cH$ to these $k$ examples results in $2^k$ different vectors of length $k.$
To find these $k$ samples we note that the first example $x_1$ splits the set of all hypotheses in $\cH$ into two: all hypotheses $h$ with $h(x_1)=1$ all hypotheses $h$ with $h(x_1)=0$. 
The idea of the proof is that for most examples the size of the two sets is almost equal (see Claim~\ref{clm:mixing-sample-partition}). 
The second example $x_2$ splits each of these sets into two again, depending on whether $h(x_2)=1$ or not. 
So in total we have a partition of $\cH$ into four parts, one that contains all hypotheses with $h(x_1)=1$ and $h(x_2)=1$, the second that contains all hypotheses with $h(x_1)=1$ and $h(x_2)=0$, the third that contains all hypotheses with $h(x_1)=0$ and $h(x_2)=1$, and the fourth that contains all hypotheses with $h(x_1)=0$ and $h(x_2)=0$.  
We will prove that for most examples these four parts are almost equal (we apply Claim~\ref{clm:mixing-sample-partition} again). 
Continuing in a similar way for $k=\Omega(\log\cH)$ examples we get that there are $k$ examples that define a partition with $2^k$ parts where none of them is empty.

\begin{claim}\label{clm:mixing-sample-partition}
For any $\mathrm{d}$-mixing graph $(A,B,E)$, any $T\subseteq B$, and any $\epsilon>0$, except for $\frac{2\mathrm{d}^2}{|T|\epsilon^2}$ vertices in $a\in A$, $$\left||\Gamma(a)\cap T|-\frac{|T|}2\right|\leq\epsilon|T|$$
\end{claim}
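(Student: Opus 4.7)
The plan is to prove the claim by a straightforward ``bad set'' counting argument, using the $\mathrm{d}$-mixing property exactly once on each of the two bad sets (for vertices with too many neighbors in $T$ and for those with too few).

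First I would observe that the mixing definition is symmetric in the two sides of the bipartition, so I may apply it with $T\subseteq B$ and any subset $S\subseteq A$ as the two test sets, obtaining $|e(S,T)-|S|t/2|\leq \mathrm{d}\sqrt{|S|t}$ where $t=|T|$. Define the two bad sets
\[
S_+ = \{a\in A : |\Gamma(a)\cap T| > t/2 + \epsilon t\}, \qquad
S_- = \{a\in A : |\Gamma(a)\cap T| < t/2 - \epsilon t\}.
\]
The complement of $S_+\cup S_-$ in $A$ is exactly the set of vertices for which the desired inequality holds, so it suffices to show $|S_+| + |S_-| \leq 2\mathrm{d}^2/(t\epsilon^2)$.

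Next, I would lower bound $e(S_+,T)$ by summing the degrees into $T$: $e(S_+,T) = \sum_{a\in S_+}|\Gamma(a)\cap T| > |S_+|(t/2+\epsilon t)$. Combining this with the mixing upper bound $e(S_+,T)\leq |S_+|t/2 + \mathrm{d}\sqrt{|S_+|t}$ yields $|S_+|\epsilon t < \mathrm{d}\sqrt{|S_+|t}$, which after squaring gives $|S_+| < \mathrm{d}^2/(\epsilon^2 t)$. The identical argument with the reverse inequality (mixing also gives $e(S_-,T)\geq |S_-|t/2 - \mathrm{d}\sqrt{|S_-|t}$ while the degree sum upper-bounds $e(S_-,T)$ by $|S_-|(t/2-\epsilon t)$) produces $|S_-| < \mathrm{d}^2/(\epsilon^2 t)$. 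Adding the two bounds gives the claimed $2\mathrm{d}^2/(t\epsilon^2)$.

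There is no real obstacle here; the only thing to be careful about is making sure the mixing definition is invoked with the correct orientation (test set on the $A$-side being the bad set itself, not a fixed set), and that the one-sided bounds from mixing go in the right direction for each of $S_+$ and $S_-$. A minor cosmetic point is that the claim as stated says ``except for $\frac{2\mathrm{d}^2}{|T|\epsilon^2}$ vertices,'' which I read as an upper bound on the number of exceptional vertices and which is exactly what the above argument yields.
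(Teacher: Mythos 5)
Your proposal is correct and follows essentially the same argument as the paper: define the two bad sets of vertices with too many or too few neighbors in $T$, apply the mixing bound to each with the bad set itself as the test set on the $A$-side, and conclude each has size at most $\mathrm{d}^2/(|T|\epsilon^2)$. The paper's proof is identical in structure (it calls the sets $A_1$ and $A_2$ and only writes out the $A_1$ case explicitly).
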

\begin{proof}
Fix $T\subseteq B.$ Denote by $A_1$ the set of vertices that have too many edges into $T$; i.e., $$A_1=\{a\in A :|\Gamma(a)\cap T|>(1/2+\epsilon)|T|\}.$$ 
Thus, $e(A_1,T)>(1/2+\epsilon)|T||A_1|$. From the mixing property we know that $$e(A_1,T)\leq\frac{|T||A_1|}2 + \mathrm{d}\sqrt{|A_1||T|}.$$
Combining the last two inequalities we get that  $$(1/2+\epsilon)|T||A_1|< \frac{|T||A_1|}2 + \mathrm{d}\sqrt{|T||A_1|}.$$
Thus, $$|A_1|<\frac{\mathrm{d}^2}{|T|\epsilon^2}$$
Similarly we can define the set $A_2\subseteq A$ that have too few edges into $T$; i.e., $A_2=\{a\in A :|\Gamma(a)\cap T|<(1/2-\epsilon)|T|\},$ and prove that this set is small. 
\end{proof}
\begin{corollary}
For any $\mathrm{d}$-mixing hypothesis class $\cH$ its VC-dimension is at least $$VCdim(\cH)=\Omega\left(\min\left\{\log{\frac{|\cH||\cX|}{\mathrm{d}^2}}, \log\cH\right\}\right).$$
\end{corollary}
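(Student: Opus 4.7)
The plan is to run the inductive partition-refinement sketched informally before Claim~\ref{clm:mixing-sample-partition}: greedily build examples $x_1, x_2, \ldots, x_k$ so that after stage $i$ the ``cell partition'' $\{\cH_{\sigma} : \sigma\in\{0,1\}^i\}$ of $\cH$ (where $\cH_\sigma = \{h\in\cH : h(x_j)=\sigma_j\;\forall j\leq i\}$) has all $2^i$ cells non-empty and, crucially, each of size at least $\left(\tfrac12-\epsilon\right)^i |\cH|$ for a fixed small $\epsilon$ (say $\epsilon=1/4$). Once we succeed for $k$ stages, $\cH$ shatters $\{x_1,\ldots,x_k\}$, and hence $VCdim(\cH)\geq k$.

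The inductive step uses Claim~\ref{clm:mixing-sample-partition} applied symmetrically, with $T$ ranging over a cell (viewed as a subset of the ``hypotheses side'' of the bipartite graph) and $A$ being $\cX$. For a single cell $T$ of size $|T|\geq s_i := (\tfrac12-\epsilon)^i|\cH|$, the number of examples $x\in\cX$ that fail to split $T$ into two pieces of sizes at least $(\tfrac12-\epsilon)|T|$ is at most $\tfrac{2\mathrm{d}^2}{|T|\epsilon^2}\leq \tfrac{2\mathrm{d}^2}{s_i\epsilon^2}$. Union-bounding over all $2^i$ cells, the set of ``bad'' examples has size at most $\tfrac{2^{i+1}\mathrm{d}^2}{s_i\epsilon^2}$. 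As long as this is strictly less than $|\cX|$, some example $x_{i+1}\in\cX$ simultaneously balances every current cell, allowing the induction to proceed.

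The bad-set bound becomes vacuous precisely when $2^{i+1}\mathrm{d}^2 \geq s_i \epsilon^2 |\cX| = (\tfrac12-\epsilon)^i |\cH| |\cX| \epsilon^2$, i.e. when $i = \Theta\!\left(\log\tfrac{|\cH||\cX|}{\mathrm{d}^2}\right)$ (with the hidden constants depending only on $\epsilon$, since $\tfrac{2}{1/2-\epsilon}$ is an absolute constant). Of course we also cannot continue past $i=\log_2|\cH|$, since the cells would no longer be non-empty. Combining the two stopping conditions, the induction runs for $k=\Omega\!\left(\min\!\left\{\log\tfrac{|\cH||\cX|}{\mathrm{d}^2},\;\log|\cH|\right\}\right)$ steps, yielding the stated lower bound on $VCdim(\cH)$.

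The only delicate point is the union bound at stage $i$: we are paying a factor of $2^i$ in the bad-set count, and this must still be dominated by $|\cX|$. The mild exponential decay $(\tfrac12-\epsilon)^i$ of the minimum cell size works in our favor only because the base $\tfrac{2}{1/2-\epsilon}$ is a constant greater than $1$, so the two $2^i$-type growth/decay factors compose into a single logarithm. Picking $\epsilon$ as a small absolute constant (e.g.\ $\epsilon=1/4$) is enough; no finer optimization is needed, since the final bound is only claimed up to constants inside the $\Omega(\cdot)$.
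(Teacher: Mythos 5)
Your proposal is correct and follows essentially the same route as the paper's proof: apply Claim~\ref{clm:mixing-sample-partition} with $\epsilon=1/4$ (with the roles of the two sides swapped, which is fine since $\mathrm{d}$-mixing is symmetric), maintain cells of size at least $(1/2-\epsilon)^i|\cH|$, union-bound the bad examples over the $2^i$ cells, and stop when either the bad set exhausts $\cX$ or the cells can no longer be non-empty, yielding the same two logarithmic terms in the minimum. The only cosmetic difference is bookkeeping --- you require the per-stage bad set to be smaller than $|\cX|$ while the paper sums the removed examples over all stages --- but both counts are dominated by the same geometric term and give identical asymptotics.
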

\begin{proof}
Use Claim~\ref{clm:mixing-sample-partition} with $\epsilon=1/4.$
In each step $i\leq k$, each part will be of size at least $\frac{|\cH|}{4^i}$. 

In each step $i\leq k$ we remove at most $2^{i+1}\frac{\mathrm{d}^24^i}{|\cH|\epsilon^2}$ examples (since there are $2^i$ parts in the partition in each step).
 In total we remove at most $8^{k+3}\frac{\mathrm{d}^2}{|\cH|}$ examples which is smaller than $|\cX|$ for $k+3\leq\log_8{\frac{|\cH||\cX|}{\mathrm{d}^2}}.$
 
Thus, in the last step, $k$, each part is of size at least $1$ for $k\leq \log_4|\cH|$, as we wanted to prove.
\end{proof}

From the previous corollary we can deduce that mixing hypothesis classes are the hardest problems since $VCdim(\cH)=O(\log\cH).$

In \cite{haussler96,langford00} it was suggested that VC-dimension is too crude to be a measure of the number of examples needed to learn.
They showed how to use the shell decomposition method to get better bounds. 
In this method the hypotheses are split according to their test errors.  
Hypotheses with similar test error are considered in the same shell.  
The number of samples needed to learn can be smaller than the one  required by the VC-dimension if the size of the shells is not too large. 
For mixing classes, different hypotheses differ on a substantial number of labeled examples (for the most part), \cite{moshkovitz17}.
Thus for classes that are mixing the size of the shells is large.  
This fact strengthens our understanding that classes that are mixing are indeed the hardest problems to learn.

\subsection{Small Perturbation of Mixing Classes}
In our exploration of the mixing complexity, we would like to know how a small perturbation of a class can change the mixing property. 
Specifically, we would like to know whether a small change to a class that is $\mathrm{d}$-mixing with small $\mathrm{d}$ is a $\mathrm{d}'$-mixing with small $\mathrm{d}'$. 
The next claim answers this question in the affirmative. 

\begin{claim}\label{clm:small_change_still_mixing}
If a hypothesis class $\cH$ is $\mathrm{d}$-mixing then by changing the labels of at most $b$ examples, the resulting class is $\mathrm{d}+\sqrt{b}$ mixing. 
\end{claim}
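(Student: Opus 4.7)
The plan is to view the class as its bipartite hypotheses graph $G=(\cH,\cX,E)$ (as in Section~\ref{sec:mixing_graphs}) and observe that flipping the label of a single example $(h,x)$ corresponds to either adding or deleting the edge $(h,x)$ in $G$. Thus if we change the labels of at most $b$ examples we obtain a new graph $G'$ that differs from $G$ in at most $b$ edges. The quantity we have to control is $|e_{G'}(S,T)-st/2|$ for every $S\subseteq\cX$, $T\subseteq\cH$ with $|S|=s,|T|=t$, and the $\mathrm{d}$-mixing hypothesis gives us the analogous bound for $G$.

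The first step is a triangle inequality: for every such $S,T$,
$$\bigl|e_{G'}(S,T)-\tfrac{st}{2}\bigr|\leq \bigl|e_G(S,T)-\tfrac{st}{2}\bigr| + \bigl|e_{G'}(S,T)-e_G(S,T)\bigr|.$$
The first term is at most $\mathrm{d}\sqrt{st}$ by the $\mathrm{d}$-mixing property of $\cH$. So the whole proof reduces to bounding the second term by $\sqrt{b}\sqrt{st}$.

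For the second term, the key observation is a simple double bound: the number of flipped edges lying inside $S\times T$ is at most $b$ (the total number of flips) and also at most $st$ (the size of the box). Hence $|e_{G'}(S,T)-e_G(S,T)|\leq \min(b,st)$, and I would finish by noting that $\min(b,st)\leq\sqrt{b\cdot st}=\sqrt{b}\sqrt{st}$ (if $b\leq st$ then $b=\sqrt{b}\sqrt{b}\leq\sqrt{b}\sqrt{st}$, otherwise $st=\sqrt{st}\sqrt{st}\leq\sqrt{b}\sqrt{st}$). Combining the two bounds gives $|e_{G'}(S,T)-st/2|\leq(\mathrm{d}+\sqrt{b})\sqrt{st}$, which is exactly the $(\mathrm{d}+\sqrt{b})$-mixing condition for the perturbed class.

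There is no real obstacle here; the only subtle point is resisting the temptation to bound the perturbation by $b$ alone (which would yield the weaker conclusion $\mathrm{d}+b/\sqrt{st}$ and fail for small boxes). Taking the elementary $\min(b,st)\leq\sqrt{bst}$ step is what delivers the clean additive bound $\sqrt{b}$ in the statement.
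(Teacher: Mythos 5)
Your proposal is correct and follows essentially the same route as the paper: a triangle inequality splitting off the original mixing bound, a $\min(b,st)$ bound on the number of edges changed inside the box $S\times T$, and the elementary inequality $\min(b,st)\leq\sqrt{b}\sqrt{st}$ to conclude. You merely spell out the last inequality more explicitly than the paper does.
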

\begin{proof}
For any $T\subseteq \cH, S\subseteq \cX$ denote by $e'(S,T)$ the number of edges between $S$ and $T$ in the hypotheses graph after the change of $b$ labels. 
Fix $T\subseteq \cH, S\subseteq \cX$ with $|S|=s$ and $|T|=t$. Our goal is to show that $$\left|e'(S,T)-\frac{st}{2} \right|\leq(\mathrm{d}+\sqrt{b})\sqrt{st}.$$
Let us bound the left hand side
\begin{align*}
\left|e'(S,T)-\frac{st}{2} \right| & \leq  \left|e(S,T)-\frac{st}{2}\right| +  \min(b,st)\\
&  \leq \mathrm{d}\sqrt{st} + \min(b,st)\\
&\leq  (\mathrm{d}+\sqrt{b})\sqrt{st} 
\end{align*}
\end{proof}

\section{Conclusions and Open Problems}\label{sec:conclusions}
In this paper we showed the relationships between both artificial and biological neural networks to mixing complexity. 
We showed that it follows from previous papers that if the data is drawn i.i.d from $\cX$, artificial and biological neural network cannot learn most classes.
Empirically it is known that problems of interest do get solved. 
One possible explanation for this apparent contradiction is that the data processed by neural networks in practice might not be i.i.d. 
For example, in many applications of artificial neural networks the same example repeats itself many times. 
Thus, it would be interesting to generalize the results in \cite{raz17,moshkovitz17} to this data acquisition setting. 
Another possible explanation is that problems of interest are not mixing and have a great deal of ``structure'' to them.
We suggested using the notion of $r$-sufficient partitions to formalize the notion of  ``structure''. 
We showed that classes that have such a partition are not mixing. 
It would be interesting to prove that these classes can be learned with a bounded-memory algorithm.

In this paper we also showed that hypothesis classes that are mixing are the hardest learning problems since their VC-dimension is $\Theta(\log|\cH|)$, which is the maximal value possible.
We also showed that these classes are robust in the sense that under a small perturbation of the labels of at most $b$ examples, the mixing complexity can increase by at most $\sqrt{b}$.

\section*{Acknowledgements} 
This work is partially supported by the Gatsby Charitable Foundation, The Israel Science Foundation, and Intel ICRI-CI center. M.M. is grateful to the Harry and Sylvia Hoffman Leadership and Responsibility Program.

\bibliography{mixing_and_nn.bib}
\bibliographystyle{icml2017}

\end{document}